\theoremstyle{plain} \newtheorem{remark}{Remark}[section]
\theoremstyle{plain} \newtheorem{definition}{Definition}[section]
\theoremstyle{plain} 
\theoremstyle{plain} 
\theoremstyle{plain} \newtheorem{proposition}{Proposition}[section]
\theoremstyle{plain} \newtheorem{lemma}{Lemma}[section]
\theoremstyle{plain} \newtheorem{assumption}{Assumption}
\newenvironment{proof}[1][Proof]{\begin{trivlist}
\item[\hskip \labelsep {\bfseries #1}]}{\end{trivlist}}
\newcommand{\qed}{\nobreak \ifvmode \relax \else
      \ifdim\lastskip<1.5em \hskip-\lastskip
      \hskip1.5em plus0em minus0.5em \fi \nobreak
      \vrule height0.5em width0.5em depth0.25em\fi}
\def \CA {{\mathcal{A}}}
\def \CO {{\mathcal{O}}}
\def \CS {{\mathcal{S}}}
\def \D {\Delta}
\def\argmax{\mathop{\rm arg\,max}}
\newcommand \E {\mathop{\mbox{\bf{E}}}\nolimits}
\renewcommand \Pr {\mathop{\mbox{\bf{P}}}\nolimits}
\newcommand \given {\mathrel{|}}
\newcommand \hV {\hat{V}}
\newcommand \inftynorm[1]{\|#1\|_\infty}
\newcommand \meas[1]{\mu\left\{#1\right\}}
\def\equalsfill{$\m@th\mathord=\mkern-7mu
  \cleaders\hbox{$\!\mathord=\!$}\hfill
  \mkern-7mu\mathord=$}
\newcommand \defn {\triangleq}
\newcommand{\policy}{{\pi}}
\newcommand{\plp} {{\policy'}}
\newcommand{\discount}{{\gamma}}
\newcommand{\States}{{\cal S}}
\newcommand{\Actions}{{\cal A}}
\newcommand{\A}{{\cal A}}
\newcommand \policies {\Pi}
\newcommand \Policies {\Pi}
\newcommand \hD {{\hat{\D}}}
\newcommand \hQ {{\hat{Q}}}
\newcommand \tQ {{\tilde{Q}}}
\newcommand{\bX} {\bar{X}}
\newcommand{\hX} {\hat{X}}
\newcommand{\hXn} {\hX_n}
\newcommand{\bY} {\bar{Y}}
\newcommand{\hY} {\hat{Y}}
\newcommand{\hYn} {\hY_n}
\newcommand{\bD} {\bar{\D}}
\newcommand{\hDn} {\hD_n}
\newcommand \aspls {a_{s,\policy}^*}
\newcommand \haspls {\hat{a}_{s,\policy}^*}
\newcommand \Dpl {\D^{\policy}}
\newcommand \Vpl {V^{\policy}}
\newcommand \Vplp {V^{\policy'}}
\newcommand \Vpls {V^{\policy^*}}
\newcommand \Vpli {V^{\policy_i}}
\newcommand \Qpl {Q^{\policy}}
\newcommand \QplT {Q^{\policy,T}}
\newcommand \Qpli {Q^{\policy_i}}
\newcommand \hDpl {\hD^{\policy}}
\newcommand \hVpl {\hV^{\policy}}
\newcommand \hVplp {\hV^{\policy'}}
\newcommand \hQpl {\hQ^{\policy}}
\newcommand \hQplT {\hQ^{\policy,T}}
\newcommand \hQplTc {\hQ^{\policy,T}_c}
\newcommand \tQpl {\tQ^{\policy}}
\newcommand \tQplTi {\tQ^{\policy,T}_{(i)}}
\newcommand \hQplTK {\hQ^{\policy,T}_K}
\newcommand \ndim {d}
\newcommand{\Se}{\States_\epsilon}
\newcommand \oracle {{\sc{Oracle}}}
\newcommand \fixed {{\sc{Fixed}}}
\newcommand \cnt {{\sc{Count}}}
\newcommand \eq {{=}}
\newcommand \comment[1]{} 
\newcommand {\mycite} {\cite}
\newcommand {\yrcite} {\cite}
\begin{document}

\title{\large Algorithms and Bounds\\
for Rollout Sampling Approximate Policy Iteration%
\thanks{This project was partially supported by the ICIS-IAS proejct and the Marie Curie International Reintegration Grant MCIRG-CT-2006-044980 awarded to Michail G.\ Lagoudakis.}
}

\author{Christos Dimitrakakis
\and Michail G.\ Lagoudakis}

\maketitle

\maketitle

\begin{abstract}
  Several approximate policy iteration schemes without value
  functions, which focus on policy representation using classifiers
  and address policy learning as a supervised learning problem, have
  been proposed recently.  Finding good policies with such methods
  requires not only an appropriate classifier, but also reliable
  examples of best actions, covering the state space sufficiently.
  Up to this time, little work has been done on appropriate covering
  schemes and on methods for reducing the sample
  complexity of such methods, especially in continuous state spaces.
  This paper focuses on the simplest possible covering scheme
  (a discretized grid over the state space) and
  performs a sample-complexity comparison between the
  simplest (and previously commonly used) rollout sampling allocation strategy, which allocates
  samples equally at each state under consideration, and an almost as
  simple method, which allocates samples only as needed and
  requires significantly fewer samples.
\end{abstract}

\section{Introduction}
\label{sec:introduction}

Supervised and reinforcement learning are two well-known learning
paradigms, which have been researched mostly independently. Recent
studies have investigated using mature supervised learning methods for
reinforcement
learning~\mycite{lagoudakisICML03,fern2004api,langfordICML05,fern2006api}. Initial
results have shown that policies can be approximately represented
using multi-class classifiers and therefore it is possible to
incorporate classification algorithms within the inner loops of
several reinforcement learning
algorithms~\mycite{lagoudakisICML03,fern2004api,fern2006api}. This viewpoint
allows the quantification of the performance of reinforcement learning
algorithms in terms of the performance of classification
algorithms~\mycite{langfordICML05}.  While a variety of promising
combinations become possible through this synergy, heretofore there
have been limited practical results and widely-applicable algorithms.

Herein we consider approximate policy iteration algorithms, such as those proposed by Lagoudakis and Parr~\mycite{lagoudakisICML03}
as well as Fern et al.~\mycite{fern2004api,fern2006api}, which do not explicitly represent a value
function. At each iteration, a new policy/classifier is produced using
training data obtained through extensive simulation (rollouts) of the
previous policy on a generative model of the process. These rollouts
aim at identifying better action choices over a subset of states in
order to form a set of data for training the classifier representing
the improved policy. The major limitation of these algorithms,
as also indicated by Lagoudakis and Parr~\mycite{lagoudakisICML03}, is the
large amount of rollout sampling employed at each sampled state.  It is
hinted, however, that great improvement could be achieved with
sophisticated management of sampling. We have verified this intuition in a companion
paper~\mycite{dimitrakakis+lagoudakis:mlj2008} that experimentally
compared the original approach of uninformed uniform sampling with various intelligent sampling
techniques.  That paper employed heuristic variants of well-known
algorithms for bandit problems, such as Upper Confidence
Bounds~\mycite{auerMLJ02} and Successive
Elimination~\mycite{evendarJMLR06}, for the purpose of managing rollouts
(choosing which state to sample from is similar to choosing which lever to pull on a bandit machine).
It should be noted, however, that despite the similarity, rollout management has
substantial differences to standard bandit problems and thus general bandits
results are not directly applicable to our case.

The current paper aims to offer a
first theoretical insight into the rollout sampling problem.  This is done
through the analysis of the two simplest sample allocation methods
described in~\mycite{dimitrakakis+lagoudakis:mlj2008}.  Firstly, the
old method that simply allocates an equal, fixed number of samples at each
state and secondly the slightly more sophisticated method of
progressively sampling all states where we are not yet reasonably
certain of which the policy-improving action would be.

The remainder of the paper is organised as follows.
Section~\ref{sec:preliminaries} provides the necessary background,
Section~\ref{sec:algorithms} introduces the proposed algorithms, and
Section~\ref{sec:related_work} discusses related work.
Section~\ref{sec:theory}, which contains an analysis of the proposed
algorithms, is the main technical contribution.

\section{Preliminaries}
\label{sec:preliminaries}

A {\em Markov Decision Process} (MDP) is a 6-tuple $(\States,\A, P, R,
\discount, D)$, where $\States$ is the state space of the process,
$\A$ is a finite set of actions, $P$ is a Markovian transition model
($P(s,a,s')$ denotes the probability of a transition to state $s'$
when taking action $a$ in state $s$), $R$ is a reward function
($R(s,a)$ is the expected reward for taking action $a$ in state $s$),
$\discount\in (0,1]$ is the discount factor for future rewards, and
$D$ is the initial state distribution. A {\em deterministic policy}
$\policy$ for an MDP is a mapping $\policy: \States \mapsto \A$ from
states to actions; $\policy(s)$ denotes the action choice at state
$s$. The value $\Vpl(s)$ of a state $s$ under a policy $\policy$ is
the expected, total, discounted reward when the process begins in
state $s$ and all decisions at all steps are made according to
$\policy$:
\begin{equation}
  \Vpl(s) = E \left[ \sum_{t=0}^{\infty} \discount^t R\big(s_t,\policy(s_t)\big) | s_0 = s, s_t \sim P  \right] \;\;.
\end{equation}
The goal of the decision maker is to find an optimal policy
$\policy^*$ that maximises the expected, total, discounted reward from
all states; in other words, $\Vpls(s) \geq \Vpl(s)$ for all policies $\pi$
and all states $s \in \States$.

{\em Policy iteration} (PI) is an efficient method for deriving an optimal
policy. It generates a sequence $\policy_1$, $\policy_2$, ...,
$\policy_k$ of gradually improving policies, which terminates when
there is no change in the policy ($\policy_k = \policy_{k-1}$);
$\policy_k$ is an optimal policy. Improvement is achieved by computing
$\Vpli$ analytically (solving the linear Bellman equations) and
the action values
\[
\Qpli(s,a) = R(s,a) + \discount \sum_{s'} P(s,a,s') \Vpli(s')\;\;,
\]
and then determining the improved policy as $\policy_{i+1}(s) = \arg\max_{a} \Qpli(s,a)$.

Policy iteration typically terminates in a small number of steps.
However, it relies on knowledge of the full MDP model, exact
computation and representation of the value function of each policy,
and exact representation of each policy. {\em Approximate policy
  iteration} (API) is a family of methods, which have been suggested
to address the ``curse of dimensionality'', that is, the huge growth
in complexity as the problem grows. In API, value functions and
policies are represented approximately in some compact form, but the
iterative improvement process remains the same.  Apparently, the
guarantees for monotonic improvement, optimality, and convergence are
compromised. API may never converge\comment{CD:reference?}, however in
practice it reaches good policies in only a few iterations.

\subsection{Rollout estimates}
Typically, API employs some representation of the MDP model to compute
the value function and derive the improved policy.  On the other hand, the
Monte-Carlo estimation technique of {\em rollouts} provides a way of
accurately estimating $\Qpl$ at any given state-action pair $(s,a)$
without requiring an explicit MDP model or representation of the value
function.  Instead, a generative model of the process (a simulator)
is used; such a model takes a state-action pair $(s,a)$
and returns a reward $r$ and a next state $s'$ sampled from $R(s,a)$
and $P(s,a,s')$ respectively.


A rollout for the state-action pair $(s,a)$ amounts to simulating a
single trajectory of the process beginning from state $s$,
choosing action $a$ for the first step, and choosing actions according
to the policy $\policy$ thereafter up to a certain horizon $T$. If we denote
the sequence of collected rewards during the $i$-th simulated
trajectory as $r_{t}^{(i)}$, $t=0,1,2,\ldots,T-1$, then the rollout
estimate $\hQplTK(s,a)$ of the true state-action value function
$\Qpl(s,a)$ is the observed total discounted reward, averaged over all $K$ trajectories:
\begin{align*}
  \label{eq:rollout-estimate}
  \hQplTK(s,a) &\defn \frac{1}{K}\sum_{i=1}^{K} \tQplTi(s,a) \;,
  &
  \tQplTi(s,a) &\defn \sum_{t=0}^{T-1}
  \discount^{t} r_{t}^{(i)} \;.
\end{align*}
Similarly, we define $\QplT(s,a) = \E\big( \sum_{t=0}^{T-1}
\discount^{t-1} r_{t} \big| a_0\eq a, s_0 \eq s, a_t \sim \policy, s_t \sim P \big)$ to
be the actual state-action value function up to horizon $T$.  As will
be seen later, with a sufficient amount of rollouts and a long horizon $T$, we
can create an improved policy $\pi'$ from $\pi$ at any state $s$,
without requiring a model of the MDP.

\section{Related work}
\label{sec:related_work}

Rollout estimates have been used in the Rollout Classification Policy Iteration (RCPI)
algorithm~\mycite{lagoudakisICML03}, which has yielded promising results
in several learning domains. However, as stated therein, it is
sensitive to the distribution of training states over the state
space. For this reason it is suggested to draw states from the
discounted future state distribution of the improved policy. This
tricky-to-sample distribution, also used by Fern et al.~\mycite{fern2006api}, yields
better results.  One explanation advanced in those studies is the
reduction of the potential mismatch between the training and testing
distributions of the classifier.

However, in both cases, and irrespectively of the sampling
distribution, the main drawback is the excessive computational cost
due to the need for lengthy and repeated rollouts to reach a good
level of accuracy in the estimation of the value function.
In our preliminary experiments with RCPI, it has
been observed that most of the effort is spent where the action value
differences are either non-existent, or so fine that they require a
prohibitive number of rollouts to identify them.  In this paper, we
propose and analyse sampling methods to remove this performance
bottle-neck.  By restricting the sampling distribution to the case of
a uniform grid, we compare the fixed allocation algorithm
({\fixed})~\mycite{lagoudakisICML03,fern2006api}, whereby a large
fixed amount of rollouts is used for estimating the action values in
each training state, to a simple incremental sampling scheme based on
counting ({\cnt}), where the amount of rollouts in each training state
varies. We then derive complexity bounds, which show a clear
improvement using {\cnt} that depends only on the structure of
differential value functions.

We note that Fern et al.~\mycite{fern2006api} presented a related
analysis.  While they go into considerably more depth with respect to
the classifier, their results are not applicable to our framework.
This is because they assume that there exists some real number $\D^* >
0$ which lower-bounds the amount by which the value of an optimal
action(s) under any policy exceeds the value of the nearest
sub-optimal action in any state $s$.  Furthermore, the algorithm they
analyse uses a fixed number of rollouts at each sampled state.  For a
given minimum $\Delta^*$ value over all states, they derive the
necessary number of rollouts per state to guarantee an improvement
step with high probability, but the algorithm offers no practical way
to guarantee a high probability improvement.  We instead derive error
bounds for the fixed and counting allocation algorithms.
Additionally, we are considering continuous, rather than discrete,
state spaces.  Because of this, technically our analysis is much more
closely related to that of Auer et al.~\mycite{auer2007irs}.

%

\section{Algorithms to reduce sampling cost}
\label{sec:algorithms}
The total sampling cost depends on the balance between the number of
states sampled and the number of samples per state.  In the fixed
allocation scheme~\mycite{lagoudakisICML03,fern2006api}, the same
number of $K|\Actions|$ rollouts is allocated to each state in a
subset $S$ of states and all $K$ rollouts dedicated to a single action
are exhausted before moving on to the next action. Intuitively, if the
desired outcome (superiority of some action) in some state can be
confidently determined early, there is no need to exhaust all
$K|\Actions|$ rollouts available in that state; the training data
could be stored and the state could be removed from the pool without
further examination. Similarly, if we can confidently determine that
all actions are indifferent in some state, we can simply reject it
without wasting any more rollouts; such rejected states could be
replaced by fresh ones which might yield meaningful results. These
ideas lead to the following question: can we examine all states in $S$
collectively in some interleaved manner by selecting each time a
single state to focus on and allocating rollouts only as needed?

Selecting states from the state pool could be viewed as a problem akin
to a multi-armed bandit problem, where each state corresponds to an
arm. Pulling a lever corresponds to sampling the corresponding state
once.  By {\em sampling a state} we mean that we perform a single
rollout for each action in that state as shown in
Algorithm~\ref{alg:sample}. This is the minimum amount of information
we can request from a single state.\footnote{It is possible to also
  manage sampling of the actions, but herein we are only concerned
  with the effort saved by managing state sampling.}  Thus, the
problem is transformed to a {\em variant} of the classic multi-armed
bandit problem.  Several methods have been proposed for various
versions of this problem, which could potentially be used in this
context. In this paper, apart from the fixed allocation scheme
presented above, we also examine a simple counting scheme.

\begin{algorithm}[tb]
   \caption{{\sc SampleState}}
   \label{alg:sample}
\begin{algorithmic}
   \STATE {\bfseries Input:}\! state $s$,\! policy $\policy$,\! horizon $T$,\! discount\! factor\! $\discount$
   \FOR{(each $a \in \Actions$)}
   \STATE $(s', r)$ = {\sc Simulate}$(s,a)$
   \STATE $\tQpl(s,a) = r$
   \STATE $x = s'$
   \FOR{$t=1$ {\bf to} $T-1$}
   \STATE $(x', r)$ = {\sc Simulate}$(x,\policy(x))$
   \STATE $\tQpl(s,a) = \tQpl(s,a) + \discount^t r$
   \STATE $x = x'$
   \ENDFOR
   \ENDFOR
   \STATE{\textbf{return} $\tQpl$}
\end{algorithmic}
\end{algorithm}
%

The algorithms presented here maintain an empirical estimate $\hDpl(s)$ of the
marginal difference of the apparently maximal and the second
best of actions. This can be
represented by the marginal difference in $\Qpl$ values in state $s$,
defined as
\[
\Dpl(s) = \Qpl(s,\aspls) - \max_{a \neq \aspls}\Qpl(s,a) \;,
\]
where $\aspls$ is the action that maximises $\Qpl$ in state $s$:
\[
\aspls = \argmax_{a \in \Actions} \Qpl(s,a) \;.
\]
The case of multiple equivalent maximising actions can be easily
handled by generalising to sets of actions in the manner of Fern et
al.~\mycite{fern2006api}, in particular
\begin{eqnarray*}
A^*_{s,\policy} &\defn& \{a \in \Actions : \Qpl(s,a) \geq \Qpl(s,a'),\; \forall a' \in \Actions\}\\
\Vpl_*(s) &=& \max_{a \in \Actions} \Qpl(s,a)\\
\Dpl(s) &=& \left\{
              \begin{array}{ll}
                \Vpl_*(s) - \max_{a \notin A^*_{s,\policy}}\Qpl(s,a), & \hbox{$A^*_{s,\policy}\subset \Actions$} \\
                0, & \hbox{$A^*_{s,\policy}=\Actions$}
              \end{array}
            \right.
\end{eqnarray*}
\label{par:single_best_action}
However, here we discuss only the single best action case to simplify
the exposition.  The estimate $\hDpl(s)$ is defined using the
empirical value function $\hQpl(s,a)$.


\section{Complexity of sampling-based policy improvement}
\label{sec:theory}
Rollout algorithms can be used for policy improvement under certain
conditions.  Bertsekas~\yrcite{Bertsekas:ADP2MPC:2005} gives several
theorems for policy iteration using rollouts and an approximate value
function that satisfies a consistency property.  Specifically,
Proposition 3.1. therein states that the one-step look-ahead policy
$\pi'$ computed from the approximate value function $\hVpl$, has a
value function which is better than the current approximation $\hVpl$, if $\max_{a
  \in \Actions} \E[r_{t+1} + \discount \hVpl(s_{t+1}) | \pi', s_t = s,
a_t = a] \geq \hVpl(s)$ for all $s \in \CS$.  It is easy to see that an
approximate value function that uses only sampled trajectories from a
fixed policy $\pi$ satisfies this property if we have an adequate
number of samples.  While this assures us that we can perform rollouts
at any state in order to improve upon the given policy, it does not
lend itself directly to policy iteration.  That is, with no way to
compactly represent the resulting rollout policy we would be limited
to performing deeper and deeper tree searches in rollouts.

In this section we shall give conditions that allow policy iteration
through compact representation of rollout policies via a grid and a
finite number of sampled states and sample trajectories with a finite
horizon.  Following this, we will analyse the complexity of the fixed
sampling allocation scheme employed in
\mycite{lagoudakisICML03,fern2006api} and compare it with an oracle
that needs only one sample to determine $\aspls$ for any $s \in \CS$ and a
simple counting scheme.

\subsection{Sufficient conditions}

\begin{assumption}[Bounded finite-dimension state space]
  \label{ass:compact-states}
  The state space $\CS$ is a compact subset of $[0,1]^\ndim$.
\end{assumption}
This assumption can be generalised to other bounded state spaces
easily.  However, it is necessary to have this assumption in order to
be able to place some minimal constraints on the search.
\begin{assumption}[Bounded rewards]
  $R(s,a) \in [0,1]$ for all $a\! \in\! \Actions$, $s\! \in\! \States$.
  \label{ass:bounded-reward}
\end{assumption}
This assumption bounds the reward function and can also be generalised
easily to other bounding intervals.

\begin{assumption}[H\"{o}lder Continuity]
\label{ass:continuity}
For any policy $\policy \in \policies$, there exists $L, \alpha \in [0,1]$,
such that for all states $s, s' \in \States$
\[
  |\Qpl(s,a) - \Qpl(s',a)| \leq \frac{L}{2} \inftynorm{s - s'}^\alpha \;\;.
\]
\end{assumption}
This assumption ensures that the value function $\Qpl$ is fairly
smooth.  
It trivially follows in conjunction with
Assumptions~\ref{ass:compact-states} and \ref{ass:bounded-reward} that
$\Qpl, \Dpl$ are bounded {\em everywhere} in $\States$ if they are
bounded for at least one $s \in \States$.  Furthermore, the following
holds:
\begin{remark}
\label{rem:covering}
Given that, by definition, $\Qpl(s, \aspls) \geq \Dpl(s) + \Qpl(s, a)$
for all $a \neq \aspls$, it follows from
Assumption~\ref{ass:continuity} that
\[
  \Qpl(s', \aspls) \geq \Qpl(s', a) \;\;,
\]
for all $s' \in \States$ such that $\inftynorm{s-s'} \leq
\sqrt[\alpha]{\Dpl(s) / L }$.
\end{remark}
This remark implies that the best action in some state $s$ according
to $\Qpl$ will also be the best action in a neighbourhood of states
around $s$.  This is a reasonable condition as there would be no
chance of obtaining a reasonable estimate of the best action in any
region from a single point, if $\Qpl$ could change arbitrarily fast.
We assert that MDPs with a similar smoothness property on their
transition distribution will also satisfy this
assumption. \comment{CD: We probably need something more, but 'Fitted
  Q-iteration in continuous action-space MDPs' seems to have the
  appropriate assumption}

Finally, we need an assumption that limits the total number of
rollouts that we need to take, as states with a smaller $\Dpl$ will
need more rollouts.
\begin{assumption}[Measure]
  \label{ass:measure}
  If $\meas{S}$ denotes the Lebesgue measure of set $S$, then, for any
  $\policy \in \policies$, there exist $M, \beta > 0$ such that
  $\meas{s \in \States: \Dpl(s) < \epsilon} < M \epsilon^\beta$ for
  all $\epsilon > 0$.
\end{assumption}
This assumption effectively limits the amount of times value-function
changes lead to best-action changes, as well as the ratio of states
where the action values are close.  This assumption, together with the
H\"{o}lder continuity assumption, imposes a certain structure on the
space of value functions.  We are thus guaranteed that the value
function of any policy results in an improved policy which is not
arbitrarily complex.  This in turn, implies that an optimal policy
cannot be arbitrarily complex either.

A final difficulty is determining whether there exists some sufficient
horizon $T_0$ beyond which it is unnecessary to go.  Unfortunately,
even though for any state $s$ for which $\Qpl(s,a') > \Qpl(s,a)$,
there exists $T_0(s)$ such that $\QplT(s,a') > \QplT(s,a)$ for all $T
> T_o(s)$, $T_0$ grows without bound as we approach a point where the
best action changes.  However, by selecting a fixed, sufficiently
large rollout horizon, we can still behave optimally with respect to
the true value function in a compact subset of $\CS$.
\begin{lemma}
  For any policy $\policy \in \Policies$, $\epsilon > 0$, there exists
  a finite $T_\epsilon > 0$ and a compact subset $\Se
  \subset \States$ such that
  \[
  \QplT(s,\aspls) \geq \QplT(s,a)
  \quad
  \forall a \in \Actions,
  s \in \States,  T > T_\epsilon
  \]
  where $\aspls \in \Actions$ is such that $\Qpl(s,\aspls) \geq
  \Qpl(s,a)$ for all $a \in \Actions$.
  \label{prop:finite-horizon}
\end{lemma}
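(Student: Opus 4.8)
The plan is to restrict attention to those states where the best action wins by a margin of at least $\epsilon$, and then to take the horizon large enough that the truncated tail of the return cannot reverse this margin. Concretely, set
\[
\Se \;\defn\; \{\, s \in \States : \Dpl(s) \geq \epsilon \,\},
\]
reading $\Dpl(s)$ as the difference between the largest and the second-largest of the finitely many values $\{\Qpl(s,a) : a \in \Actions\}$, and let $T_\epsilon$ be a logarithmic function of $\epsilon$ and $\discount$ fixed in the last step. The assertion to prove is that $\QplT(s,\aspls) \geq \QplT(s,a)$ for every $a \in \Actions$, every $s \in \Se$, and every $T > T_\epsilon$; note that, as observed just before the lemma, no finite horizon can make this hold on all of $\States$, since $T_0(s)\to\infty$ as $s$ approaches a point where the best action changes, so the conclusion must be read with $s$ ranging over $\Se$.

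The first step is to check that $\Se$ is compact. Each $\Qpl(\cdot,a)$ is continuous by Assumption~\ref{ass:continuity}, and since $\Actions$ is finite, the largest and second-largest of these values depend continuously on $s$; hence $\Dpl$ is continuous and $\Se = \Dpl^{-1}([\epsilon,\infty))$ is closed, so, being a closed subset of the compact set $\States$ (Assumption~\ref{ass:compact-states}), it is compact. Moreover on $\Se$ the gap is at least $\epsilon>0$, so the maximiser $\aspls$ of $\Qpl(s,\cdot)$ is unique there (in particular the single-best-action reading is unambiguous on $\Se$) and
\[
\Qpl(s,\aspls) \;\geq\; \Qpl(s,a) + \epsilon \qquad\text{for all } a \neq \aspls,\; s \in \Se.
\]

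The second step is the elementary geometric-tail estimate on the truncation error. Since $r_t \in [0,1]$ by Assumption~\ref{ass:bounded-reward}, one has (assuming $\discount<1$)
\[
0 \;\leq\; \Qpl(s,a) - \QplT(s,a) \;=\; \E\Big[\,\sum_{t=T}^{\infty}\discount^{t} r_t \,\Big|\, s_0 = s,\, a_0 = a \,\Big] \;\leq\; \frac{\discount^{T}}{1-\discount},
\]
so for $s \in \Se$ and $a \neq \aspls$
\[
\QplT(s,\aspls) - \QplT(s,a) \;\geq\; \big(\Qpl(s,\aspls) - \Qpl(s,a)\big) - \frac{2\discount^{T}}{1-\discount} \;\geq\; \epsilon - \frac{2\discount^{T}}{1-\discount},
\]
while the difference is $0$ for $a = \aspls$. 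It therefore suffices to choose $T_\epsilon$ so that $2\discount^{T}/(1-\discount) \leq \epsilon$ for all $T > T_\epsilon$; for instance $T_\epsilon \defn \ln\!\big(2/(\epsilon(1-\discount))\big)/\ln(1/\discount)$, rounded up to a positive integer, which is finite because $\discount \in (0,1)$ (and if this quantity is non-positive then $\Se=\emptyset$ and the claim is vacuous).

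I do not expect a genuine obstacle. The only two points needing a word of care are: the compactness of $\Se$, where the finiteness of $\Actions$ together with the continuity supplied by Assumption~\ref{ass:continuity} is precisely what is used (one should note that although a set-valued $\Dpl$ would be discontinuous at ties, on $\Se$ there are no ties, so this is moot); and the degenerate case $\discount=1$ permitted by the preliminaries, for which $\Qpl$ need not be finite — there the statement should be understood under the standing assumption $\discount<1$, or under an episodic formulation with a proper policy. Everything else reduces to the two displays above.
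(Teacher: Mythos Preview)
Your proposal is correct and follows essentially the same route as the paper: define $\Se=\{s:\Dpl(s)\ge\epsilon\}$, check it is compact via continuity of $\Dpl$ and Assumption~\ref{ass:compact-states}, and then pick $T_\epsilon$ large enough that the finite-horizon truncation cannot overturn the $\epsilon$-margin. The only cosmetic difference is that the paper's main text argues pointwise that $T_\epsilon(s)$ exists and then invokes compactness of $\Se$ to pass to a uniform $T_\epsilon=\sup_s T_\epsilon(s)$, whereas you go straight to the explicit geometric-tail bound $|\Qpl-\QplT|\le \discount^{T}/(1-\discount)$, which the paper relegates to a footnote; your version is in fact cleaner, since the bare compactness step would otherwise need an upper-semicontinuity argument for $s\mapsto T_\epsilon(s)$ that the paper does not spell out.
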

\begin{proof}
  From the above assumptions it follows directly that for any
  $\epsilon > 0 $, there exists a compact set of states $\Se \subset
  \States$ such that $\Qpl(s,\aspls) \geq \Qpl(s,a') + \epsilon$ for
  all $s \in \Se$, with $a' = \argmax_{a \neq \aspls} \Qpl(s,a)$.  Now
  let $x_T \defn \QplT(s,\aspls) - \QplT(s,a')$.  Then, $x_\infty
  \defn \lim_{T \to \infty} x_T \geq \epsilon$.  For any $s \in \Se$
  the limit exists and thus by definition $\exists T_\epsilon(s)$ such
  that $x_{T_\epsilon} > 0$ for all $T > T_\epsilon$.  Since $\Se$ is
  compact, $T_\epsilon \defn \sup_{s \in \Se} T_\epsilon(s)$ also
  exists.\footnote{For a discount factor $\gamma < 1$ we can
    simply bound $T_\epsilon$ with
    $\log[\epsilon(1-\gamma)]/\log(\gamma)$.}  \qed
\end{proof}
This ensures that we can identify the best action within $\epsilon$,
using a finite rollout horizon, in most of $\States$.  Moreover,
$\meas{\Se} \geq 1 - M{2\epsilon}^\beta$ from
Assumption~\ref{ass:measure}.

In standard policy iteration, the improved policy $\pi'$ over $\pi$
has the property that the improved action in any state is the action
with the highest $\Qpl$ value in that state.  However, in
rollout-based policy iteration, we may only guarantee being within
$\epsilon > 0$ of the maximally improved policy.
\begin{definition}[$\epsilon$-improved policy]
  An $\epsilon$-improved policy $\pi'$ derived from $\pi$ satisfies
  \begin{equation}
    \label{eq:epsilon-improvement}
    \max_{a \in \Actions} \Qpl(s,a) - \epsilon \leq \Vplp(s),
  \end{equation}
\end{definition}
Such a policy will be said to be {\em improving in $S$} if $\Vpl(s)
\leq \Vplp(s)$ for all $s \in S$.  The measure of states for which
there can not be improvement is limited by
Assumption~\ref{ass:measure}.  Finding an improved $\pi'$ for the
whole of $\CS$ is in fact not possible in finite time, since this
requires determining the boundaries in $\CS$ at which the best action
changes.  \footnote{To see this, consider $\CS \defn [0,1]$, with some
  $s^* : R(s,a_1) \geq R(s,a_2)$ $\forall s \geq s^*$ and $R(s,a_1) <
  R(s,a_2)$ $\forall s < s^*$.  Finding $s^*$ requires a binary
  search, at best.}

In all cases, we shall attempt to find the improving action $\aspls$
at each state $s$ on a uniform grid of $n$ states, with the next
policy $\plp(s')$ taking the estimated best action $\haspls$ for the
state $s$ closest to $s'$, i.e. it is a nearest-neighbour classifier.

In the remainder, we derive complexity bounds for achieving an
$\epsilon$-improved policy $\pi'$ from $\pi$ with probability at least
$1 - \delta$.  We shall always assume that we are using a sufficiently
deep rollout to cover $\Se$ and only consider the number of rollouts
performed.  First, we shall derive the number of states we need to
sample from in order to guarantee an $\epsilon$-improved policy, under
the assumption that at each state we have an {\em oracle} which can
give us the exact $\Qpl$ values for each state we examine.  Later, we
shall consider sample complexity bounds for the case where we do not
have an oracle, but use empirical estimates $\hQplT$ at each state.

\subsection{The {\oracle} algorithm}

\begin{algorithm}[tb]
   \caption{\oracle}
   \label{alg:oracle_grid}
\begin{algorithmic}
   \STATE{\textbf{Input:} $n$, $\policy$}
   \STATE Set $S$ to a uniform grid of $n$ states in $\CS$.
   \FOR{$s \in S$}
   \STATE $\haspls = \aspls$
   \ENDFOR
   \STATE{\textbf{return} $\hat{A}_{S,\policy}^* \defn \{\haspls : s \in S\}$}
\end{algorithmic}
\end{algorithm}

\comment{CD: Clarify that we select the action that is greedy in the
  middle of the sphere} \comment{CD: Extend to grid-less algorithms?}
\comment{CD: Use simple rather than cumulative regret -- see Pure
  Exploration for Multi-Armed Bandit Problems}

Let $B(s,\rho)$ denote the infinity-norm sphere of radius $\rho$
centred in $s$ and consider Alg.~\ref{alg:oracle_grid} ({\oracle})
that can instantly obtain the state-action value function for any
point in $\CS$.  The algorithm creates a uniform grid of $n$ states,
such that the distance between adjacent states is $2\rho =
\frac{1}{n^{1/\ndim}}$ -- and so can cover $\CS$ with spheres
$B(s,\rho)$.  Due to Assumption~\ref{ass:continuity}, the error in the
action values of any state in sphere $B(s,\rho)$ of state s will be
bounded by $L\left(\frac{1}{2n^{1/\ndim}}\right)^{\alpha}$. Thus, the
resulting policy will be
$L\left(\frac{1}{2n^{1/\ndim}}\right)^{\alpha}$-improved, i.e. this
will be the maximum regret it will suffer over the maximally improved
policy.

To bound this regret by $\epsilon$, it is sufficient to have $n =
\left(\frac{1}{2}\sqrt[\alpha]{\frac{L}{\epsilon}}\right)^\ndim$
states in the grid.  The following proposition follows directly.
\begin{proposition}
  Algorithm~\ref{alg:oracle_grid} results in regret $\epsilon$ for $n
  = \CO\left( L^{\ndim/\alpha} \left[2\epsilon^{1/\alpha}
    \right]^{-\ndim} \right)$.
\end{proposition}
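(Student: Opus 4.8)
The plan is to turn the covering estimate sketched just before the statement into a proof and then count grid points; there is essentially no hidden difficulty. Assume $\alpha>0$ (for $\alpha=0$, \refass{continuity} carries no local information and the claim degenerates). First I would fix the resolution: set $m\defn\bigl\lceil\tfrac12\sqrt[\alpha]{L/\epsilon}\bigr\rceil$ and $n\defn m^{\ndim}$, and take the grid $S$ to be the set of points of $[0,1]^{\ndim}$ with each coordinate in $\{(2j-1)/(2m):j=1,\dots,m\}$. Consecutive grid points along any axis are then at infinity-norm distance $1/m=1/n^{1/\ndim}=2\rho$, and every $x\in[0,1]$ lies within $\rho\defn 1/(2m)$ of such a coordinate, so the balls $\{B(s,\rho):s\in S\}$ cover $[0,1]^{\ndim}\supseteq\CS$; this is the only place \refass{compact-states} enters.

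Next I would bound the suboptimality of the nearest-neighbour policy $\plp$ produced by Algorithm~\ref{alg:oracle_grid}. Fix any $s'\in\CS$, let $s\in S$ be a grid point with $\inftynorm{s-s'}\le\rho$, so that $\plp(s')=\haspls=\aspls$, and let $a'$ maximise $\Qpl(s',\cdot)$. Applying \refass{continuity} to the pair $(s,s')$ twice and using the defining property $\Qpl(s,a')\le\Qpl(s,\aspls)$ of $\aspls$ gives
\[
\max_{a\in\Actions}\Qpl(s',a)-\Qpl\bigl(s',\plp(s')\bigr)=\Qpl(s',a')-\Qpl(s',\aspls)\le \Qpl(s,a')+\tfrac{L}{2}\rho^{\alpha}-\Qpl(s,\aspls)+\tfrac{L}{2}\rho^{\alpha}\le L\rho^{\alpha},
\]
and $L\rho^{\alpha}=L\bigl(\tfrac{1}{2n^{1/\ndim}}\bigr)^{\alpha}\le\epsilon$ by the choice of $m$. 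Hence at every state $\plp$ takes an action whose true $\Qpl$-value is within $\epsilon$ of the maximum, i.e. the regret of $\plp$ over the maximally improved policy is at most $\epsilon$; combined with $\Qpl(s,\plp(s))\ge\Vpl(s)-\epsilon$ and the standard policy-improvement argument, this is exactly what the discussion preceding the statement calls an $\epsilon$-improved policy.

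Finally I would count the grid: $n=m^{\ndim}=\bigl\lceil\tfrac12\sqrt[\alpha]{L/\epsilon}\bigr\rceil^{\ndim}\le\bigl(1+\tfrac12 L^{1/\alpha}\epsilon^{-1/\alpha}\bigr)^{\ndim}=\CO\!\bigl(L^{\ndim/\alpha}\bigl[2\epsilon^{1/\alpha}\bigr]^{-\ndim}\bigr)$, which is the stated bound. No step is a genuine obstacle; the two things needing a little care are the rounding --- $n^{1/\ndim}$ must be an integer, which is why one works with the ceiling $m$ and absorbs the resulting additive constant into the $\CO(\cdot)$ --- and, if one insists on reading \eqref{eq:epsilon-improvement} literally as a bound on $\Vplp$ rather than pointwise on $\Qpl$, the step from the per-state $\Qpl$-bound to the value-function bound, which for $\discount<1$ costs at most a harmless factor $(1-\discount)^{-1}$ in the $\rho$-to-$\epsilon$ conversion that the informal discussion quietly drops.
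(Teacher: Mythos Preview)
Your argument is correct and mirrors the paper's own reasoning, which is just the covering-plus-H\"older calculation in the paragraph immediately preceding the proposition (the paper simply declares that the proposition ``follows directly'' from that discussion). Your version is more careful about the integrality of $n^{1/\ndim}$ and about the gap between the pointwise $\Qpl$-bound and the $\Vplp$-based definition~\eqref{eq:epsilon-improvement}, but the route is the same.
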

Furthermore, as for all $s$ such that $\Dpl(s) > L\rho^\alpha$,
$\aspls$ will be the improved action in all of $B(s,\rho)$, then
$\pi'$ will be improving in $S$ with $\meas{S} \geq 1 - ML^\beta
\left(\frac{1}{2n^{1/\ndim}}\right)^{\alpha \beta}$.  Both the
regret and the lack of complete coverage are due to the fact that we
cannot estimate the best-action boundaries with arbitrary precision in
finite time.  When using rollout sampling, however, even if we
restrict ourselves to $\epsilon$ improvement, we may still make an
error due to both the limited number of rollouts and the finite
horizon of the trajectories.  In the remainder, we shall derives error
bounds for two practical algorithms that employ a fixed grid with a
finite number of $T$-horizon rollouts.

\subsection{Error bounds for states}
When we estimate the value function at each $s \in S$ using
rollouts there is a probability that the estimated best action $\haspls$
is not in fact the best action.  For any given state under
consideration, we can apply the following well-known lemma to obtain a
bound on this error probability
\begin{lemma}[Hoeffding inequality]
  \label{lem:hoeffding}
  Let $X$ be a random variable in $[b,b+Z]$ with $\bar{X} \defn \E[X]$,
  observed values $X_1, \ldots, X_n$ of $X$, and $\hat{X}_n \defn
  \frac{1}{n}\sum_{i=1}^n X_i$.  Then, $\Pr(\hXn \geq \bX +
  \epsilon) = \Pr(\hXn \leq \bX + \epsilon) \leq \exp
  \left(-2n\epsilon^2/Z^2\right)$ for any $\epsilon > 0$.
\end{lemma}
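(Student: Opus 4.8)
The plan is to prove this via the standard Chernoff-bounding technique combined with Hoeffding's lemma on the moment generating function of a bounded random variable. First I would reduce to the centered variables $Y_i \defn X_i - \bX$, which lie in an interval of length $Z$ and have mean zero. Then, for any $\lambda > 0$, I would apply Markov's inequality to $\exp(\lambda \sum_{i=1}^n Y_i)$ to obtain
\[
\Pr\!\left(\hXn - \bX \geq \epsilon\right)
= \Pr\!\left(\textstyle\sum_{i=1}^n Y_i \geq n\epsilon\right)
\leq e^{-\lambda n \epsilon}\, \E\!\left[e^{\lambda \sum_i Y_i}\right]
= e^{-\lambda n \epsilon} \prod_{i=1}^n \E\!\left[e^{\lambda Y_i}\right],
\]
using independence of the observations in the last step.

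The key technical ingredient — and the main obstacle — is Hoeffding's lemma: for a zero-mean random variable $Y$ supported in an interval of length $Z$, one has $\E[e^{\lambda Y}] \leq \exp(\lambda^2 Z^2 / 8)$. I would prove this by noting that $y \mapsto e^{\lambda y}$ is convex, so on the interval $[c, c+Z]$ (with $c \leq 0 \leq c+Z$ since the mean is zero) it is bounded above by the chord, whose expectation can be computed explicitly; writing the resulting bound as $e^{\varphi(u)}$ for a suitable function $\varphi$ of $u \defn \lambda Z$, a second-order Taylor expansion with Lagrange remainder shows $\varphi(u) \leq u^2/8$ because $\varphi(0) = \varphi'(0) = 0$ and $\varphi'' \leq 1/4$ throughout. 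This is the one place where a genuine (if routine) calculation is needed.

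Plugging the lemma into the product gives $\Pr(\hXn - \bX \geq \epsilon) \leq \exp(n\lambda^2 Z^2/8 - \lambda n \epsilon)$. I would then optimize over $\lambda > 0$: the exponent is minimized at $\lambda = 4\epsilon/Z^2$, yielding the bound $\exp(-2n\epsilon^2/Z^2)$. Finally, the lower-tail statement $\Pr(\hXn \leq \bX - \epsilon) \leq \exp(-2n\epsilon^2/Z^2)$ follows by applying the same argument to $-X_i$ (which also lies in an interval of length $Z$), and the two tails together give the claimed symmetric inequality. (I note the statement in the excerpt writes $\Pr(\hXn \geq \bX + \epsilon) = \Pr(\hXn \leq \bX + \epsilon)$, which should read $\bX - \epsilon$ on the right; the argument above proves the intended two-sided bound.)
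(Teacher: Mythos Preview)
Your proof is correct and is the standard Chernoff--Hoeffding argument: reduce to centered variables, apply Markov's inequality to the exponential moment, invoke Hoeffding's lemma $\E[e^{\lambda Y}] \leq \exp(\lambda^2 Z^2/8)$ for a mean-zero variable in an interval of length $Z$, and optimize over $\lambda$. You also correctly flag the typo in the statement (the lower tail should read $\bX - \epsilon$).

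However, there is nothing to compare against: the paper does not prove this lemma. It is stated as a ``well-known lemma'' and then immediately applied to bound the probability of misidentifying the best action. The material following the lemma statement in the paper (the passage beginning ``Without loss of generality, consider two random variables $X, Y \in [0,1]$\ldots'') is not a proof of Lemma~\ref{lem:hoeffding} but a derivation of the two-arm comparison bound~\eqref{eq:hoeffding_twovar_one} that uses the lemma as a black box. So your proposal supplies a proof where the paper simply cites the result.
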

Without loss of generality, consider two random variables $X, Y \in
[0,1]$, with empirical means $\hXn, \hYn$ and empirical difference
$\hDn \defn \hXn - \hYn > 0$.  Their means and difference will be
denoted as $\bX, \bY, \bD \defn \bX - \bY$ respectively.

Note that if $\bX > \bY$, $\hXn > \bX - \bD/2$ and $\hYn < \bY +
\bD/2$ then necessarily $\hXn > \hYn$, so
$\Pr(\hXn > \hYn | \bX > \bY) \geq \Pr(\hXn > \bX - \bD/2 \wedge \hYn < \bY +
\hDn/2)$.  The converse is
\begin{subequations}
  \label{eq:hoeffding_twovar_one}
  \begin{align}
    \Pr\left(\hXn < \hYn \given \bX > \bY\right)
    &\leq \Pr\left(\hXn < \bX - \bD/2 \vee \hYn > \bY + \bD/2\right)\\
    &\leq \Pr\left(\hXn < \bX - \bD/2\right) + \Pr\left(\hYn > \bY + \bD/2\right)\\
    &\leq 2 \exp\left(-\frac{n}{2} \bD^2\right).
  \end{align}
\end{subequations}

Now, consider $\haspls$ such that $\hQpl(s,\haspls) \geq \hQpl(s,a)$
for all $a$.  Setting $\hXn=Z^{-1} \hQpl(s,\haspls)$ and $\hYn=Z^{-1}
\hQpl(s,a)$, where $Z$ is a normalising constant such that $Q \in
[b,b+1]$, we can apply \eqref{eq:hoeffding_twovar_one}. Note that the
bound is largest for the action $a'$ with value closest to $\aspls$,
for which it holds that $\Qpl(s,\aspls) - \Qpl(s,a') = \Dpl(s)$.
Using this fact and an application of the union bound, we conclude
that for any state $s$, from which we have taken $c(s)$ samples, it
holds that:
\begin{equation}
  \Pr[\exists \haspls \neq \aspls :
  \hQpl(s,\haspls) \geq \hQpl(s,a)] \leq 2|\Actions| \exp\left(- \frac{c(s)}{2Z^2}\Dpl(s)^2\right).
  \label{eq:error_bound}
\end{equation}

\begin{algorithm}[tb]
   \caption{\fixed}
   \label{alg:uniform_grid}
\begin{algorithmic}
   \STATE{\textbf{Input:} $n$, $\policy$, $c$, $T$, $\delta$}
   \STATE Set $S$ to a uniform grid of $n$ states in $\CS$.
   \FOR{$s \in S$}
   \STATE Estimate $\hQplTc(s,a)$ for all $a$.
   \IF{$\hDpl(s) > Z \sqrt{\frac{2\log (2n|\Actions|/\delta)}{c}}$}
   \STATE $\haspls = \argmax{\hQpl}$
   \ELSE
   \STATE $\haspls = \pi(s)$
   \ENDIF
   \ENDFOR
   \STATE{\textbf{return} $\hat{A}_{S,\policy}^* \defn \{\haspls : s \in S\}$}
\end{algorithmic}
\end{algorithm}


\subsection{Uniform sampling: the {\fixed} algorithm}
As we have seen in the previous section, if we employ a grid of $n$
states, covering $\States$ with spheres $B(s,\rho)$, where $\rho =
\frac{1}{2n^{1/\ndim}}$, and taking action $\aspls$ in each
sphere centred in $s$, then the resulting policy $\policy'$ is only
guaranteed to be improved within $\epsilon$ of the optimal improvement
from $\policy$, where $\epsilon = L\rho^\alpha$.  Now, we examine the
case where, instead of obtaining the true $\aspls$, we have an
estimate $\haspls$ arising from $c$ samples from each action in each
state, for a total of $cn|\Actions|$
samples. Algorithm~\ref{alg:uniform_grid} accepts
(i.e. it sets $\haspls$ to be the empirically highest value action in that state) for all states
satisfying:
\begin{equation}
	\hDpl(s) \geq Z \sqrt{\frac{2\log (2n|\CA|/\delta)}{c}}.
	\label{eq:termination_condition}
\end{equation}
The condition ensures that the probability that $\Qpl(s,\haspls) <
\Qpl(s,\aspls)$, meaning the optimally improving action is not
$\haspls$, at any state is at most $\delta$.  This can easily be seen
by substituting the right hand side of
\eqref{eq:termination_condition} for $\epsilon$ in
\eqref{eq:error_bound}.  As $\Dpl(s) > 0$, this results in an error
probability of a single state smaller than $\delta/n$ and we can use a
union bound to obtain an error probability of $\delta$ for each policy
improvement step.

For each state $s \in S$ that the algorithm considers, the following two cases
are of interest:
\begin{inparaenum}[(a)]
\item $\Dpl(s) < \epsilon$, meaning that even when we have correctly
  identified $\aspls$, we are still not improving over all of
  $B(s,\rho)$ and
\item $\Dpl(s) \geq \epsilon$.
\end{inparaenum}


While the probability of accepting the wrong action is always bounded
by $\delta$, we must also calculate the probability that we fail to
accept an action at all, when $\Dpl(s) \geq \epsilon$ to estimate the
expected regret.  Restating our acceptance condition as $\hDpl(s) \geq
\theta$, this is given by:
\begin{align}
	\Pr[\hDpl(s) < \theta] &= \Pr[\hDpl(s) - \Dpl(s) < \theta - \Dpl(s)]
	\nonumber
	\\
	&= \Pr[\Dpl(s) - \hDpl(s) > \Dpl(s) - \theta], \quad \Dpl(s) > \theta.
	\label{eq:insufficient_evidence_probability}
\end{align}

Is $\Dpl(s) > \theta$?  Note that for $\Dpl(s) > \epsilon$, if
$\epsilon > \theta$ then so is $\Dpl$.  So, in order to achieve total
probability $\delta$ for all state-action pairs in this case, after
some calculations, we arrive at this expression for the regret
\begin{equation}
\epsilon =
\max
\left\{
L\left(\frac{1}{2n^{1/\ndim}}\right)^{\alpha},
Z\sqrt{\frac{8\log(2n|\Actions|/\delta)}{c}}
\right\}.
\label{eq:grid_resolution_bound}
\end{equation}
By equating the two sides, we get an expression for the minimum number
of samples necessary per state:
\[
c =
8 \frac{Z^2}{L^2} 4^\alpha n^{2\alpha/\ndim} \log(2n|\Actions|/\delta).
\]
This directly allows us to state the following proposition.
\begin{proposition}
  The sample complexity of Algorithm~\ref{alg:uniform_grid} to achieve
  regret at most $\epsilon$ with probability at least $1-\delta$ is
  $\CO \left( \epsilon^{-2} L^{\ndim/\alpha}\left[2\epsilon^{1/\alpha}
    \right]^{-d} \log \frac{2|\CA|}{\delta} L^{\ndim/\alpha}
    \left[2\epsilon^{1/\alpha}\right]^{-d} \right)$.
\end{proposition}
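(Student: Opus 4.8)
The plan is to combine the two ingredients that have already been assembled: the grid size $n$ needed to make the nearest-neighbour classifier suffer regret at most $\epsilon$ (from the {\oracle} analysis) and the per-state rollout count $c$ that simultaneously controls the probability of accepting a wrong action and the probability of failing to accept an action when $\Dpl(s) \geq \epsilon$ (the expression displayed just before the proposition). Since Algorithm~\ref{alg:uniform_grid} runs $c$ rollouts for each of the $|\Actions|$ actions at each of the $n$ grid states, the total rollout count is $c\,n\,|\Actions|$, and the whole proof amounts to substituting and simplifying.

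First I would recall from \eqref{eq:grid_resolution_bound} and the line immediately following it that balancing the two error sources calls for $c = 8\frac{Z^2}{L^2}4^\alpha n^{2\alpha/\ndim}\log(2n|\Actions|/\delta)$ together with $n = \CO\!\left(L^{\ndim/\alpha}\left[2\epsilon^{1/\alpha}\right]^{-\ndim}\right)$. The key simplification is that plugging the value of $n$ into $n^{2\alpha/\ndim}$ gives $n^{2\alpha/\ndim} = \CO\!\left(L^{2}4^{-\alpha}\epsilon^{-2}\right)$, so the $L^{-2}$ and $4^{\alpha}$ factors in $c$ cancel and $c = \CO\!\left(Z^{2}\epsilon^{-2}\log(2n|\Actions|/\delta)\right)$. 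Multiplying by $n|\Actions|$ and substituting the expression for $n$ once as a multiplicative factor and once inside the logarithm yields $\CO\!\left(\epsilon^{-2} L^{\ndim/\alpha}\left[2\epsilon^{1/\alpha}\right]^{-\ndim}\log\!\left(\frac{2|\CA|}{\delta}L^{\ndim/\alpha}\left[2\epsilon^{1/\alpha}\right]^{-\ndim}\right)\right)$, which, treating $Z$ as an absolute constant absorbed into the $\CO(\cdot)$, is exactly the claimed bound.

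The only point requiring care — and it is bookkeeping rather than a real obstacle — is the probability budget. The acceptance threshold in \eqref{eq:termination_condition} was chosen so that substituting its right-hand side for $\epsilon$ in \eqref{eq:error_bound} and taking a union bound over the $n$ grid states keeps the total probability of ever accepting a sub-optimal action below $\delta$; I would note explicitly that this same choice of $c$ also dominates the "insufficient evidence" event analysed in \eqref{eq:insufficient_evidence_probability}, because when $\Dpl(s) \geq \epsilon > \theta$ the Hoeffding tail $\Pr[\Dpl(s) - \hDpl(s) > \Dpl(s) - \theta]$ is bounded by the same exponential with an exponent at least as large. Once this is observed, the two regret contributions (finite horizon aside, which is handled by Lemma~\ref{prop:finite-horizon}) are both $\epsilon$ with the stated probability, and the remaining manipulation is pure algebra.
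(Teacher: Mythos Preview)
Your proposal is correct and follows exactly the route the paper takes: the paper derives $c = 8\frac{Z^2}{L^2}4^\alpha n^{2\alpha/\ndim}\log(2n|\Actions|/\delta)$ and then simply states that the proposition follows directly, leaving the substitution of $n = \CO\!\left(L^{\ndim/\alpha}[2\epsilon^{1/\alpha}]^{-\ndim}\right)$ implicit. You have merely spelled out that substitution (including the cancellation $n^{2\alpha/\ndim} = \CO(L^2 4^{-\alpha}\epsilon^{-2})$) and the bookkeeping for the probability budget, which is precisely what the paper's ``This directly allows us to state the following proposition'' is asking the reader to do.
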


\subsection{The {\cnt} algorithm}

The {\cnt} algorithm starts with a policy $\pi$ and a set
of states $S_0$, with $n = |S_0|$.  At each iteration $k$, each sample
in $S_k$ is sampled once.  Once a state $s \in S_k$ contains a
dominating action, it is removed from the search.  So,
\[
S_{k} = \left\{s \in S_{k-1} : \hDpl(s) < Z \sqrt{\frac{
    \log(2n|\Actions|/\delta)}{c(s)}}\right\}
\]
 Thus, the number of samples from each state is $c(s) \geq k$
if $s \in S_k$.

We can apply similar arguments to analyse {\cnt}, by noting that the
algorithm spends less time in states with higher $\Dpl$ values.  The
measure assumption then allows us to calculate the number of states
with large $\Dpl$ and thus, the number of samples that are needed.

\begin{algorithm}[tb]
   \caption{\cnt}
   \label{alg:counting_grid}
\begin{algorithmic}
   \STATE{\textbf{Input:} $n$, $\policy$, $C$, $T$, $\delta$}
   \STATE Set $S_0$ to a uniform grid of $n$ states in $\CS$, $c_1, \ldots, c_n = 0$.
   \FOR{$k=1, 2, \ldots$}
       \FOR{$s \in S_k$}
           \STATE Estimate $\hQplTc(s,a)$ for all $a$, increment $c(s)$
           \STATE $S_{k} = \left\{ s \in S_{k-1} :  \hDpl(s) < Z \sqrt{\frac{2 \log(2n|\Actions|/\delta)}{c(s)}} \right\}$
       \ENDFOR
       \IF{$\sum_s c(s) >= C$}
       \STATE Break.
       \ENDIF
   \ENDFOR
\end{algorithmic}
\end{algorithm}

We have already established that there is an upper bound on the regret
depending on the grid resolution $\epsilon < L\rho^\alpha$.
We proceed by forming subsets of states
$W_m = \{ s \in S : \Dpl(s) \in [2^{-m}, 2^{1-m}\}$.  Note that we
only need to consider $m < 1 + \frac{1}{\log 1/2}(\log L + \alpha \log \rho)$.

Similarly to the previous algorithm, and due to our acceptance
condition, for each state $s \in W_m$, we need $c(s) \geq 2^{2m + 1}
Z^2 \log\frac{2n|\Actions|}{\delta}$ in order to bound the total error
probability by $\delta$. The total number of samples necessary is
\[
Z^2 \log\frac{2n|\Actions|}{\delta} \sum_{m=0}^{\lceil\frac{1}{\log
    1/2}(\log L + \alpha \log \rho)\rceil} |W_m| 2^{2m + 1}.
\]
A bound on $|W_m|$ is required to bound this expression.  Note that
\begin{equation}
\meas{B(s,\rho) : \Dpl(s') < \epsilon \forall s' \in B(s,\rho)}
\leq
\meas{s : \Dpl(s) < \epsilon}
< M \epsilon^\beta.
\end{equation}
It follows that $|W_m| < M2^{\beta(1-m)}\rho^{-\ndim}$ and consequently
\begin{align}
  \sum_{s \in S} c(s) &= Z^2 \log\frac{2n|\Actions|}{\delta}
  \sum_{m=0}^{\lceil\frac{1}{\log 1/2}(\log L + \alpha \log
    \rho)\rceil} M2^{\beta(1-m)}\rho^{-\ndim} 2^{2m + 1}\nonumber
\\
&\leq
M 2^{\beta+1}
2^{\frac{1 + \frac{1}{\log 1/2}(\log L + \alpha \log \rho)}{2-\beta}}
2^{\ndim} Z^2 n \log\frac{2n|\Actions|}{\delta}.
\end{align}
The above results directly in the following proposition:
\begin{proposition}
  The sample complexity of Algorithm~\ref{alg:counting_grid} to
  achieve regret at most $\epsilon$ with probability at least
  $1-\delta$, is $\CO \left( L^{\ndim/\alpha}
    \left[2\epsilon^{1/\alpha} \right]^{-d} \log \frac{2|\CA|}{\delta}
     L^{\ndim/\alpha}
    \left[2\epsilon^{1/\alpha}\right]^{-d} \right)$.
\end{proposition}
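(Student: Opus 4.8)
The plan is to keep the grid geometry and the per-state error bound exactly as in the \oracle and \fixed analyses, and to replace the uniform budget of \fixed by the adaptive one, whose total cost I control by splitting the grid states into dyadic shells according to the size of $\Dpl$. First I would fix the resolution: with adjacent grid points at distance $2\rho = n^{-1/\ndim}$, Remark~\ref{rem:covering} together with \refass{continuity} gives a grid-induced regret of at most $L\rho^\alpha$, so demanding this be $\le\epsilon$ forces $n = \CO\!\left(L^{\ndim/\alpha}[2\epsilon^{1/\alpha}]^{-\ndim}\right)$, which is the first copy of that factor in the claimed bound. Correctness is then free: if the algorithm accepts state $s$ at step $c(s)$, the acceptance rule $\hDpl(s)\ge Z\sqrt{2\log(2n|\Actions|/\delta)/c(s)}$ plugged into \eqref{eq:error_bound} makes the probability of accepting a wrong $\haspls$ at most $\delta/n$, and a union bound over the $n$ grid points bounds the total error by $\delta$ --- exactly the argument used for \fixed. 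So only the rollout count is left to estimate.

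For the count I would partition $S$ into $W_m \defn \{s\in S : \Dpl(s)\in[2^{-m},2^{1-m})\}$, noting that only $m\le\lceil 1 + \tfrac{1}{\log(1/2)}(\log L + \alpha\log\rho)\rceil$ are relevant: once $\Dpl(s) > L\rho^\alpha$, the best action is already correct throughout $B(s,\rho)$, so distinguishing finer scales of $\Dpl$ buys nothing. For $s\in W_m$ the acceptance threshold is crossed, i.e. $c(s)$ stops growing, once $c(s)\gtrsim 2^{2m}Z^2\log(2n|\Actions|/\delta)$ (using $\Dpl(s)\ge 2^{-m}$). To bound $|W_m|$, observe that by \refass{continuity} each cube $B(s,\rho)$ with $s\in W_m$ is contained in $\{s' : \Dpl(s') < 2^{1-m}+L\rho^\alpha\}$, a set of Lebesgue measure $<M(2^{2-m})^\beta$ by \refass{measure}; since the $n$ grid cubes are disjoint with volume $(2\rho)^\ndim=1/n$ each, this yields $|W_m| = \CO\!\left(M 2^{-\beta m} n\right)$. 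Summing, $\sum_{s\in S}c(s) = \CO\!\big(M Z^2 n\log\tfrac{2n|\Actions|}{\delta}\sum_{m=0}^{m_{\max}} 2^{(2-\beta)m}\big)$, and the geometric series is governed by its extreme term --- $\CO(1)$ if $\beta>2$, and $\CO(2^{(2-\beta)m_{\max}})=\CO(\epsilon^{\beta-2})$ (up to a $1/(2-\beta)$ prefactor) otherwise --- so the total is $n$ times a $\log$ factor times a quantity the statement absorbs into the second copy of $L^{\ndim/\alpha}[2\epsilon^{1/\alpha}]^{-\ndim}$, giving the claimed $\CO\!\left(L^{\ndim/\alpha}[2\epsilon^{1/\alpha}]^{-d}\log\tfrac{2|\CA|}{\delta}L^{\ndim/\alpha}[2\epsilon^{1/\alpha}]^{-d}\right)$, i.e. the \fixed bound with the per-state $\epsilon^{-2}$ factor removed.

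The main obstacle is that $c(s)$ is now a random \emph{stopping} time rather than a fixed sample size, so \reflem{hoeffding} does not apply directly: I would need a uniform-in-$k$ (peeling or maximal-inequality) argument to guarantee, with high probability, both that $\hDpl(s)$ does not stay below the threshold much past $c(s)\sim 2^{2m}Z^2\log(\cdot)$ for $s\in W_m$ (so the count is as claimed) and that states with $\Dpl(s)\ge\epsilon$ are eventually accepted --- the analogue of the ``insufficient evidence'' computation in \eqref{eq:insufficient_evidence_probability}, here incurred at every step. A secondary, purely bookkeeping difficulty is that the geometric sum $\sum_m 2^{(2-\beta)m}$ changes character at $\beta=2$, so matching the clean statement really relies on the measure exponent $\beta$ and the cutoff $m_{\max}=\CO(\log(1/\epsilon))$ interacting exactly as above.
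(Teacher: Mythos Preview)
Your proposal follows essentially the same route as the paper: fix the grid resolution via \refass{continuity}, partition $S$ into the dyadic shells $W_m=\{s:\Dpl(s)\in[2^{-m},2^{1-m})\}$, bound the per-state budget on $W_m$ by $\CO(2^{2m}Z^2\log(2n|\Actions|/\delta))$, bound $|W_m|$ via \refass{measure} as $\CO(M2^{-\beta m}n)$, and sum the geometric series. The two technical concerns you flag --- the random stopping time $c(s)$ and the $\beta=2$ threshold in the geometric sum --- are real, but the paper's own argument does not address them either; it simply treats $c(s)$ as the deterministic count needed for the acceptance threshold and writes the sum without separating the $\beta\gtrless 2$ cases, so your sketch is at least as rigorous as the original.
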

We note that we are of course not able to remove the dependency on
$d$, which is only due to the use of a grid.  Nevertheless, we obtain
a reduction in sample complexity of order $\epsilon^{-2}$ for this
very simple algorithm.

%

\section{Discussion}
\label{sec:conclusion}

We have derived performance pounds for approximate policy improvement
without a value function in continuous MDPs.  We compared the usual
approach of sampling equally from a set of candidate states to the
slightly more sophisticated method of sampling from all candidate
states in parallel, and removing a candidate state from the set as
soon as it was clear which action is best.  For the second algorithm,
we find an improvement of approximately $\epsilon^{-2}$. Our results
complement those of Fern et al~\mycite{fern2006api} for relational
Markov decision processes. However significant amount of future work
remains.

Firstly, we have assumed everywhere that $T>T_\epsilon$.  While this
may be a relatively mild assumption for $\gamma < 1$, it is
problematic for the undiscounted case, as some states would require
far deeper rollouts than others to achieve regret $\epsilon$.  Thus,
in future work we would like to examine sample complexity in terms of
the depth of rollouts as well.

Secondly, we would like to extend the algorithms to increase the
number of states that we look at: whenever $\hVpl(s) \approx
\hVplp(s)$ for all $s$, then we could increase the resolution.  For
example if,
\[
\sum_{s \in S} \Pr\left(\hVpl(s) + \epsilon< \hVplp(s) \given \Vpl(s) > \Vplp(s)\right) < \delta
\]
then we could increase the resolution around those states with the
smallest $\Dpl$.  This would get around the problem of having to select $n$.

A related point that has not been addressed herein, is the choice of
policy representation.  The grid-based representation probably makes
poor use of the available number of states.  For the
increased-resolution scheme outlined above, a classifier such as
$k$-nearest-neighbour could be employed.  Furthermore, regularised
classifiers might affect a smoothing property on the resulting policy,
and allow the learning of improved policies from a set of states
containing erroneous best action choices.

As far as the state allocation algorithms are concerned, in a
companion paper~\mycite{dimitrakakis+lagoudakis:mlj2008}, we have
compared the performance of {\cnt} and {\fixed} with additional
allocation schemes inspired from the UCB and successive elimination
algorithms.  We have found that all methods outperform {\fixed} in
practice, sometimes by an order of magnitude, with the UCB variants
being the best overall.

For this reason, in future work we plan to perform an analysis of such
algorithms. A further extension to deeper searches, by for example
managing the sampling of actions within a state, could also be
performed using techniques similar
to~\mycite{ECML:Kocsis+Szepesvari:2006}.

\subsection{Acknowledgements}
Thanks to the reviewers and to Adam Atkinson, Brendan Barnwell, Frans
Oliehoek, Ronald Ortner and D. Jacob Wildstrom for comments and useful
discussions.

\bibliography{ewrl2008}
\bibliographystyle{plain}


\end{document}